\documentclass[11pt]{article}
\usepackage[margin=1in]{geometry}
\usepackage[T1]{fontenc}
%
\usepackage{graphicx,xcolor,color}
\usepackage{amsfonts,amsmath,amssymb,amsthm}
%

\usepackage{hyperref}
\hypersetup{
	colorlinks=true,
	linkcolor=blue,
	citecolor=blue,
	filecolor=blue,      
}
\usepackage{color}

\urlstyle{rm}
\usepackage{enumitem}
\setlist[itemize]{noitemsep}
\setlist{nosep,before=\vspace{0.5\baselineskip},after=\vspace{0.5\baselineskip}}
\providecommand{\keywords}[1]{\textbf{\textit{Keywords:}} #1}

\newtheorem{theorem}{Theorem}
\newtheorem{remark}{Remark}

\newcommand{\bA}{\mathbf{A}}
\newcommand{\bb}{\mathbf{b}}
\newcommand{\bp}{\mathbf{p}}

\newcommand{\bg}{\mathbf{g}}

\newcommand{\bx}{\mathbf{x}}
\newcommand{\by}{\mathbf{y}}
\newcommand{\bphi}{\boldsymbol{\phi}}
\newcommand{\bzeta}{\boldsymbol{\zeta}}

\DeclareMathOperator*{\argmin}{arg\,min}

\allowdisplaybreaks

\begin{document}
\title{Image Decomposition with G-norm Weighted by Total Symmetric Variation}

\author{Roy Y. He\thanks{City University of Hong Kong, Kowloon Tong, Hong Kong. Email: royhe2@cityu.edu.hk.}, 
Martin Huska\thanks{University of Bologna, Bologna, Italy. Email: martin.huska@unibo.it.}, 
Hao Liu\thanks{Hong Kong Baptist University, Kowloon Tong, Hong Kong.
Email: haoliu@hkbu.edu.hk.}
}

\date{}
\maketitle

\begin{abstract} 

In this paper, we propose a novel variational model for decomposing images into their respective cartoon and texture parts. Our model characterizes certain non-local features of any Bounded Variation (BV) image  by its \textit{total symmetric variation} (TSV). We demonstrate that TSV is effective in identifying regional boundaries. Based on this property, we introduce a weighted Meyer's $G$-norm to identify texture interiors without including contour edges. For BV images with bounded TSV, we show that the proposed model admits a solution.  Additionally, we  design a fast algorithm based on operator-splitting to tackle the associated non-convex optimization problem. The performance of our method is validated by a series of numerical experiments.

\keywords{Image decomposition, Meyer's $G$-norm, Operator-splitting.}
\end{abstract}

\section{Introduction}
Image decomposition refers to the process of representing an image as a combination of several images,  each of which ideally possesses distinct characteristics. It is closely linked to our  ability to swiftly distinguish between smooth surfaces and rough walls,  soft shadows and sharp edges, as well as repetitive patterns and random noise. Such phenomenon has led to extensive research into variational models for image decomposition based on image gradient~\cite{aubert2005modeling,aujol2006structure}. One of the most famous examples is the Rudin-Osher-Fatemi (ROF)  total variation (TV) model~\cite{rudin1992nonlinear} for decomposing an $L^2$ image into its cartoon  and noise part:
\begin{equation}\label{eq_model_ROF}
\min_{\substack{f=u+v, \\
u\in\text{BV}(\Omega), v\in L^2(\Omega)}}\int_{\Omega}|D u| + \mu \|v\|_{L^2(\Omega)}\;.
\end{equation}
Here $\int_{\Omega}|Du|$ is the TV-norm of $u$ and $\mu>0$ is the regularization parameter. Meyer~\cite{meyer2001oscillating}   suggested  the $G$-norm, which was generalized by Aubert and Aujol~\cite{aubert2005modeling} to bounded regions,  to model the oscillatory part. The Meyer's model is
\begin{equation}\label{eq_model_Meyer}
\min_{\substack{f=u+v, \\
u\in\text{BV}(\Omega), v\in G(\Omega)}}\int_{\Omega}|D u| + \mu \|v\|_{G(\Omega)}\;,
\end{equation}
whose dual relation with~\eqref{eq_model_ROF} is discussed in~\cite{meyer2001oscillating}. Diverse directions have been widely explored including capturing oscillations with other norms~\cite{osher2003image,aujol2006structure}, modeling structure by non-convex term \cite{Huska_2019}, and decomposition into more components~\cite{huska2021variational,he2024euler}. Most of them focus on low-level contrast  and arbitrary oscillations.

Textures are specific regional oscillations that have more statistical regularity. Analogous to edges, they  contribute to early-stage segmentation, and many psycho-physical studies have found the connection between the distribution of non-local contrast and pre-attentive segregation of textures~\cite{julesz1973inability}. For example, Julesz et al.~\cite{julesz1973inability} suggested the \textit{dipole statistics} which consider image gray levels at opposite ends of line segments with varying lengths, orientations, and placements.  
Other models involve size~\cite{olson1970variables}, contour termination~\cite{marr1976early}, and line crossings~\cite{bergen1988early}. Arguably, these features cannot be fully characterized by magnitudes of intensity gradient. We need to consider the contextual influences~\cite{grigorescu2004contour} that lead to differentiating isolated edges from grouped edges. Some statistical solutions including Histogram of Oriented Gradient (HoG)~\cite{zuo2014gradient}, spectral histogram~\cite{liu2003texture} and neural network-based method~\cite{yang2020learning} are available.

In this paper, we note that the non-local variant of image gradient~\cite{gilboa2009nonlocal} suggests an interesting direction to tackle this problem. Consider a compact Lipschitzian domain $\Omega\subset\mathbb{R}^2$, an image $f\in C^1_0(\Omega,\mathbb{R})$, and a nonnegative symmetric function $w:\Omega\times\Omega\to\mathbb{R}$. Define the non-local gradient of $f$ at $\bx\in\Omega$ as
\begin{equation}
\nabla_w f(\bx,\by) = (f(\by)-f(\bx))\sqrt{w(\bx,\by)}\;,~\forall\by\in\Omega\;.\label{eq_NLG}
\end{equation}
In practice, $w$ often has bounded impacts, i.e., $|w(\bx,\by)|=0$ if $\|\bx-\by\|_2>r$ for some small $r>0$. In case $w(\bx,\by)=\chi(\|\bx-\by\|_2<r)$ for some $r>0$, we recover the  dipole statistics~\cite{julesz1973inability}.  Here $\chi$ denotes the characteristic function giving a value of $1$ if the condition argument is true; and $0$ otherwise.

We observe that~\eqref{eq_NLG} can be expressed in a line integral form 
\begin{equation}
\nabla_wf(\bx,\by) = \int_{\mathbb{R}}f(\bx+t(\by-\bx))\mu(t,\bx,\by)\,dt\;,\label{eq_NLG1}
\end{equation}
where $\mu(t,\bx,\by)= \sqrt{w(\bx,\bx+t(\by-\bx))}H'(t)$; $H(t)=\chi(0\leq t\leq 1)$, and its derivative is understood in the distributional sense. Formally, we have
\begin{equation}
\nabla_wf(\bx,\by) = -\int_{\mathbb{R}}W(t,\bx,\by)\frac{d f}{dt}(\bx+t(\by-\bx))\,dt\;,\label{eq_NLG2}
\end{equation}
where $W$ is a (generalized) function whose distributional derivative with respect to $t$ is $\mu(t,\bx,\by)$. The expression~\eqref{eq_NLG2} implies that $\nabla_wf(\bx,\by)$ can be regarded as a weighted integration of the directional derivative of $f$. In particular, if we impose certain conditions on $W$, we shall show that~\eqref{eq_NLG2} provides a way to quantify the incoherent variations of $f$ along one direction. Since image gradient can be sensitive to noise, denoising is necessary for noisy inputs to ensure the effectiveness of this approach. In this work, we focus on clean images.

\begin{figure}[t!]
    \centering
    \begin{tabular}{ccc}
    \includegraphics[width=0.3\linewidth]{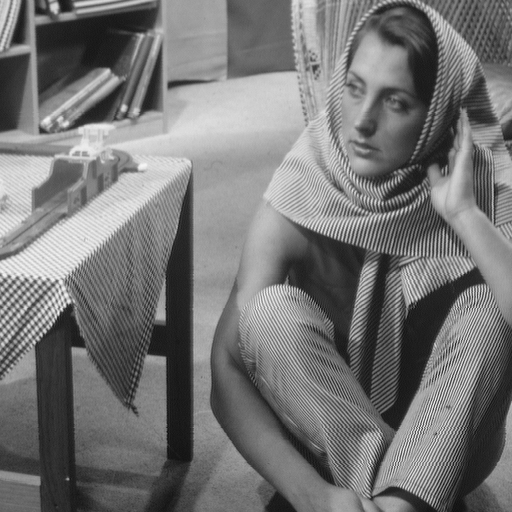} & \includegraphics[width=0.3\linewidth]{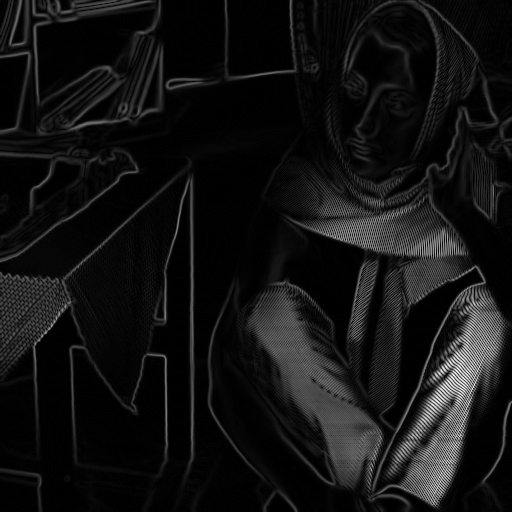} & \includegraphics[width=0.3\linewidth]{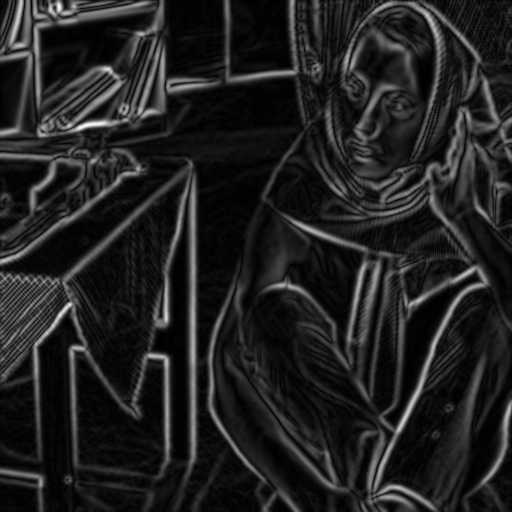}\\
            \includegraphics[width=0.3\linewidth]{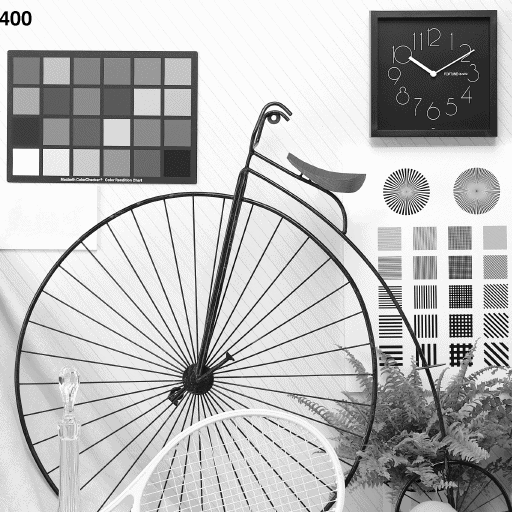} & \includegraphics[width=0.3\linewidth]{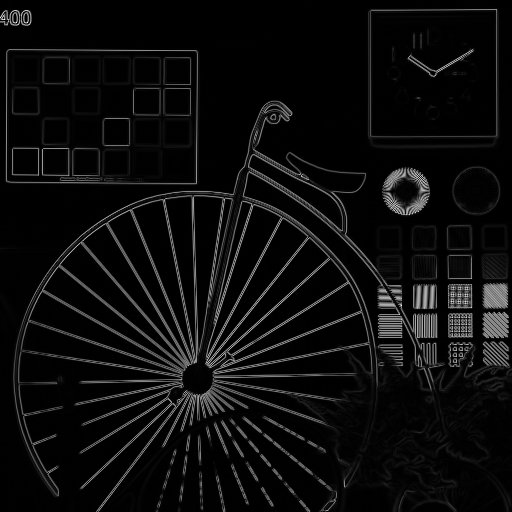} & \includegraphics[width=0.3\linewidth]{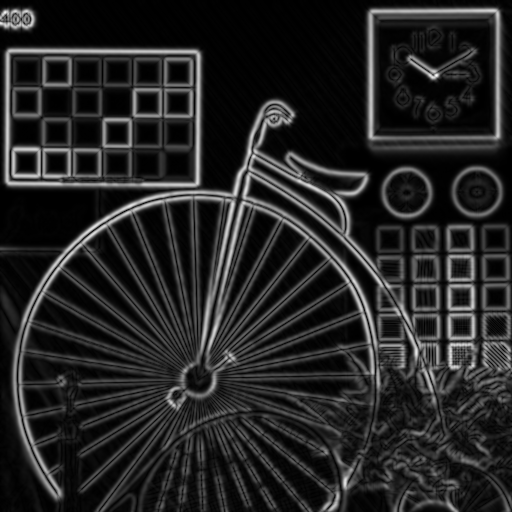}\\
        (a)&(b)&(c)\\
    \end{tabular}
    \caption{Image gradient (b) and proposed Total Symmetric Variation (TSV)~\eqref{eq_TSV} (c), computed for (a). TSV is high at regional boundaries while remains low in homogeneous and textural interiors. We exploit this property for image decomposition.}
    \label{fig:TSV_show}
\end{figure}

Our contributions are: \textbf{(I)} We propose \textbf{T}otal \textbf{S}ymmetric \textbf{V}ariation (TSV) generalizing  the observation above. We formulate~\eqref{eq_NLG2} for images with bounded variations and extend $W$ to be any radially symmetric  functions. TSV allows us to detect \textit{regional boundaries} including those between homogeneous regions or those between regions with different textures. See Figure~\ref{fig:TSV_show}.  \textbf{(II)} Leveraging the TSV of an image $f$, we  propose an adaptive variational model for cartoon-texture decomposition.
We leverage the TSV of the input image $f$ to resolve the ambiguity between structural and textural edges. \textbf{(III)} We design an effective operator-splitting algorithm to tackle the associated non-convex problem.

We organize the paper as follows. In Section~\ref{sec_proposed}, we introduce our model for image decomposition. More specifically, we describe the model and  weighted $G$-norm with general weight functions in Section~\ref{subsec_weight_G_model}; we propose the total symmetric variation (TSV) in Section~\ref{sec_TSV}; and present the variational form of our model in Section~\ref{subsec_variation_model}. In Section~\ref{sec_algorithm}, we develop an operator-splitting algorithm for the proposed model. In Section~\ref{sec_experiment}, we  illustrate the effectiveness of our method through  numerical ablation and comparison studies. We conclude  in Section~\ref{sec_conclude}.

\section{Proposed Method}\label{sec_proposed}
Given a noise-free image $f:\Omega\to\mathbb{R}$ defined over a rectangular  domain $\Omega\subset\mathbb{R}^2$, we  decompose  $f$ into its cartoon or structural part and its textural part. 

\subsection{Non-local weighted  model}\label{subsec_weight_G_model}
We propose the following variational model for decomposition:
\begin{align}
\min_{\substack{u,v,\ u+v=f}}\int_{\Omega}|D u|+\lambda \|\eta v\|_G.
\label{eq.model.0}
\end{align}
The image $u$ represents the structural part and $v$ represents the textural part of $f$, respectively. The energy functional consists of the TV-norm of $u$ and the Meyer's $G$-norm of a weighted $v$, which is controlled by a regularization parameter $\lambda>0$. The  weight function $\eta:\Omega\to (0,+\infty)$ depends on $f$ and is an important part of our model. It incorporates non-local features of $f$ and suppresses  regional boundaries. We note that without the weight, $G$-norm alone may not distinguish between oscillatory regions and edges. We expect that $\eta(\bx)$ is large when $\bx$ is close to regional boundaries of $f$; and $\eta(\bx)$ is small when $\bx$ is inside homogeneous or textural regions of $f$.
Many candidates exist to satisfy these properties, and we will propose the \textit{total symmetric variation} in Section~\ref{sec_TSV} to build our weight function. The following discussion applies to general weights.

 Recall that the Meyer's $G$-norm of $v\in L^2(\Omega)$  is defined by~\cite{aubert2005modeling}
\begin{equation}
\|v\|_{G(\Omega)} = \inf\{\|\bg\|_{L^\infty(\Omega,\mathbb{R}^2)}~|~v=\nabla\cdot\,\bg\;,\bg\cdot \mathbf{N}=0~\text{on}~\partial\Omega \}\;.
\end{equation}
where $\mathbf{N}$ is the outward normal direction of $\partial\Omega$. It is thus natural to interpret
\begin{equation}\label{eq_weightG_norm}
\| \eta v\|_{G(\Omega)} = \inf\{\|\bg\|_{L^\infty(\Omega,\mathbb{R}^2)}~|~\eta v=\nabla\cdot\,\bg\;,\bg\cdot \mathbf{N}=0~\text{on}~\partial\Omega \}\;.
\end{equation}
 The weighted semi-norm~\eqref{eq_weightG_norm} induces a weighted space
\begin{equation}\label{eq_weighted_Gspace}
G_{\eta}(\Omega) = \{v\in L^2(\Omega)~|~ \|\eta v\|_{G(\Omega)}<+\infty\}\;,
\end{equation}
and we denote $\|v\|_{G_{\eta}(\Omega)}=\|\eta v\|_{G(\Omega)}$. We remark that the notion of weighted $G$-norm was considered in \cite{cui2023surface} for surface reconstruction from noisy point clouds. In this paper, it is the first time that it is used to distinguish regional boundaries and textural regions for image decomposition.

\subsection{Total symmetric variation of BV functions}\label{sec_TSV}
In this section, we propose a weight function $\eta$ in~\eqref{eq.model.0} to highlight regional boundaries. 
Recall that the \textit{total variation} of $f\in L^1(\Omega)$ is defined by
\begin{equation}
\int_{\Omega}|D f| = \sup\bigg\{\int_{\Omega} f(\bx)\nabla\cdot\bphi(\bx)\,d\bx,~\bphi\in C_0^\infty(\Omega,\mathbb{R}^2), \|\bphi\|_{L^\infty}\leq 1\bigg\}\;,
\end{equation}
and $\text{BV}(\Omega)=\{f\in L^1(\Omega)~|~\int_{\Omega}|Df|<+\infty\}$. When $f\in \text{BV}(\Omega)$, it is known that its one-dimensional slices are also of bounded variation~(\cite{ambrosio2000functions}, Remark 3.104). More precisely,  for any $\bzeta\in \mathbb{S}^1$, $f_{\by,\bzeta}(t) := f(\by + t\bzeta)\in \text{BV}(\Omega_{\by}^{\bzeta})$ for $\mathcal{H}^1$-almost every (-a.e.) $\by\in L_{\bzeta}$, which is a line perpendicular to $\bzeta$ passing through the origin,  $\Omega_{\by}^{\bzeta}:=\{\by+s\bzeta\in\Omega: s\in\mathbb{R}\}$ is a line segment in $\Omega$,   and $\mathcal{H}^s$ is  the $s$-dimensional Hausdorff measure. We can thus  define  for $\mathcal{H}^2$-almost every $\bx\in\Omega$ a Radon measure $Df_{\bx,\bzeta}$  along the line passing through $\bx$ in the direction of $\bzeta$. 

Let $B(\mathbb{R})$ be the set of bounded, symmetric, Borel measurable functions on the real line. Define the following set of functions with compact support:
\begin{equation}
\mathcal{W}=\{w\in B(\mathbb{R}):w(x)=w(-x), x\in\mathbb{R}~\text{and}~ w(x)=0, |x|>r~\text{for some } r>0\}\;.
\end{equation}
 For any $w\in \mathcal{W}$ with support $[-r,r]$, we define  the \textit{total symmetric  variation} (TSV) of $f\in\text{BV}(\Omega)$ associated with $w$ by
\begin{equation}
\left|D_wf\right|(\bx):=\int_{S^1}\left|\int_{\Omega_{\bx}^{\bzeta}}w\,dDf_{\bx,\bzeta}\right|\,d\mathcal{H}^{1}(\bzeta)\;,~\text{for}~\mathcal{H}^2\text{-a.e.}~\bx\in\Omega\;,\label{eq_TSV}
\end{equation}
 and set $|D_wf|(\bx)=+\infty$ for the other $\bx$ in a $\mathcal{H}^2$-zero set. As~\eqref{eq_NLG2} suggests, the TSV computes the weighted integration of the directional derivative along $\Omega_{\bx}^{\bzeta}$.

 The proposed TSV~\eqref{eq_TSV} considers symmetric variations of $f$ at $\bx$ in all directions. Specifically, the absolute integral $I_{w}^{\bzeta}(x)=\left|\int_{\Omega_{\bx}^{\bzeta}}w\,dDf_{\bx,\bzeta}\right|$ measures the visual incoherence of $f$ caused by asymmetric local contrast changes along the direction of $\bzeta$. If the variation of $f$ along $\bzeta$ on one side of $\bx$ cancel with the variation on the other side, then $I_{w}^{\bzeta}(x)$ is small; whereas if the contrast of $f$ varies inconsistently around $\bx$ along the direction of $\bzeta$, then $I_{w}^{\bzeta}(x)$ becomes large. Typically, when $\bx$ is near a boundary between two homogeneous regions or  regions with different textural patterns, $I_{w}^{\bzeta}(x)$ is large for $\mathcal{H}^1$-almost all directions $\bzeta$. 
 
\begin{remark}
We note that the proposed TSV~\eqref{eq_TSV} is related to various models  in the literature for understanding texture segregation. For example, when $w(x)=\chi([-r,r])$ for finite $r>0$, TSV is an extension of the  \textit{dipole statistics} suggested in~\cite{julesz1973inability}. An anisotropic variant of TSV~\eqref{eq_TSV} can be defined if $w$ also depends on the direction $\bzeta\in \mathbb{S}^1$. 
Different from the \textit{windowed inherent variation}~\cite{xu2012structure}, which is defined by locally integrating absolute values of image gradients within rectangular windows, the proposed TSV~\eqref{eq_TSV} evaluates line integrals of absolute values of image gradients along all directions emanating from single points.
\end{remark}

\subsection{Variational formulation and existence of solutions}\label{subsec_variation_model}
We introduce the variational formulation of our model as follows. Given $f\in \text{BV}(\Omega)$ and $w\in\mathcal{W}$ satisfying  $|D_wf|\in L^\infty(\Omega)$, we decompose $f$ into its structure part $u$ and texture part $v$ by solving the minimization problem:

\begin{equation}\label{var_model}
\min_{\substack{u\in\text{BV}(\Omega),v\in G_{\eta}(\Omega)\\u+v=f}} \int_{\Omega}|D u|+\lambda\|v\|_{G_{\eta}(\Omega)},~\text{where}~\eta(f) = \kappa + |D_wf|
\;.\end{equation}
In~\eqref{var_model} above, the weight function $\eta$ satisfies the desired properties stated in Section~\ref{subsec_weight_G_model}, and $\kappa>0$ is a constant to enforce strong positivity. 
Analogously to the existence result for the Meyer's model~\cite{meyer2001oscillating}, we can prove that
\begin{theorem}Suppose that  $f\in \text{BV}(\Omega)$, $w\in\mathcal{W}$ satisfying  $|D_wf|\in L^\infty(\Omega)$, and $\kappa>0$, then the functional in~\eqref{var_model} admits a minimizer.
\end{theorem}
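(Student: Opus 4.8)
The plan is to apply the direct method of the calculus of variations. Write $E(u,v)=\int_\Omega|Du|+\lambda\|v\|_{G_\eta(\Omega)}$ for the energy in~\eqref{var_model}. The feasible set is nonempty (for instance $(f,0)$, since $0=\nabla\cdot\mathbf{0}\in G_\eta(\Omega)$ and $f\in\text{BV}(\Omega)$), and $E$ is nonnegative, so its infimum $m$ over the feasible set is finite. I would take a minimizing sequence $(u_n,v_n)$ with $u_n+v_n=f$ and $E(u_n,v_n)\to m$. Eliminating $v_n=f-u_n$ reduces the problem to controlling the single sequence $u_n$, and the finiteness of the energy immediately yields uniform bounds on $\int_\Omega|Du_n|$ and on $\|v_n\|_{G_\eta(\Omega)}=\|\eta v_n\|_{G(\Omega)}$.

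The first substantial step is a uniform $\text{BV}$ bound on $u_n$, for which the total-variation bound alone is insufficient; I must also bound $\|u_n\|_{L^1}$. Here the hypotheses on $\eta$ are decisive. Since $\|v_n\|_{G_\eta}<\infty$, there is a field $\bg_n$ with $\eta v_n=\nabla\cdot\bg_n$ and $\bg_n\cdot\mathbf{N}=0$ on $\partial\Omega$; testing the weak identity $-\int_\Omega\bg_n\cdot\nabla\phi=\int_\Omega\eta v_n\,\phi$ with $\phi\equiv 1$ gives the weighted mean-zero identity $\int_\Omega\eta v_n=0$, i.e. $\int_\Omega\eta u_n=\int_\Omega\eta f$. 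Because $\kappa\le\eta\le\kappa+\|\,|D_wf|\,\|_{L^\infty(\Omega)}$, the weight is bounded away from $0$ and $\infty$; combining the Poincar\'e--Wirtinger inequality $\|u_n-\overline{u_n}\|_{L^2}\le C\int_\Omega|Du_n|$ (valid in $2$D since $\text{BV}(\Omega)\hookrightarrow L^2(\Omega)$) with this identity lets me solve for the mean $\overline{u_n}$ and conclude it is bounded. Hence $u_n$ is bounded in $\text{BV}(\Omega)$.

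By the compact embedding $\text{BV}(\Omega)\hookrightarrow\hookrightarrow L^1(\Omega)$ I extract a subsequence with $u_n\to u^\ast$ strongly in $L^1$ (and weakly-$\ast$ in $\text{BV}$), whence $v_n=f-u_n\to v^\ast:=f-u^\ast$ and the constraint $u^\ast+v^\ast=f$ passes to the limit. Lower semicontinuity of the total variation under $L^1$ convergence gives $\int_\Omega|Du^\ast|\le\liminf\int_\Omega|Du_n|$. For the weighted $G$-norm I would pick near-optimal fields with $\|\bg_n\|_{L^\infty}\le\|v_n\|_{G_\eta}+1/n$; these are uniformly bounded in $L^\infty(\Omega,\mathbb{R}^2)=(L^1)^\ast$, so Banach--Alaoglu yields $\bg_n\rightharpoonup^\ast\bg^\ast$ with $\|\bg^\ast\|_{L^\infty}\le\liminf\|\bg_n\|_{L^\infty}$. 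Passing to the limit in $-\int_\Omega\bg_n\cdot\nabla\phi=\int_\Omega\eta v_n\,\phi$ for $\phi\in C^\infty(\overline\Omega)$ (using $v_n\to v^\ast$ in $L^1$ and $\eta\phi\in L^\infty$) shows $\eta v^\ast=\nabla\cdot\bg^\ast$ with $\bg^\ast\cdot\mathbf{N}=0$ on $\partial\Omega$, so $\|v^\ast\|_{G_\eta}\le\liminf\|v_n\|_{G_\eta}<\infty$. Superadditivity of $\liminf$ then gives $E(u^\ast,v^\ast)\le m$, and feasibility forces equality, so $(u^\ast,v^\ast)$ is a minimizer.

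The main obstacle I anticipate is the lower semicontinuity of the weighted $G$-norm, specifically ensuring that the boundary condition $\bg\cdot\mathbf{N}=0$ survives the weak-$\ast$ limit. This is why I would work throughout with the weak divergence formulation tested against all $\phi\in C^\infty(\overline\Omega)$ rather than only compactly supported test functions: the boundary constraint is then encoded inside the identity and is inherited automatically by $\bg^\ast$. A secondary point requiring care is that every use of the weight depends on $\eta\in L^\infty$ with $\eta\ge\kappa>0$; the assumption $|D_wf|\in L^\infty(\Omega)$ and the constant $\kappa$ enter precisely at the $L^1$ bound on $u_n$ and in the continuity of the pairing $\int_\Omega\eta v_n\,\phi$ under $L^1$ convergence of $v_n$.
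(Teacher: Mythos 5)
Your proof is correct, and while it follows the same direct-method skeleton as the paper (minimizing sequence, compactness, closedness of the feasible set), it diverges at the key closure step and is in fact more complete than the paper's own argument. The paper works with $v_n$ directly: it asserts boundedness of $\{v_n\}$ in $\text{BV}(\Omega)$, extracts a \emph{weakly} convergent subsequence in $L^2(\Omega)$ via the embedding $\text{BV}(\Omega)\hookrightarrow L^2(\Omega)$, shows $\int_\Omega \eta v^*\,d\bx=0$, and then invokes the Bourgain--Brezis theorem to produce some bounded field $\bg^*$ with $\nabla\cdot\bg^*=\eta v^*$ and zero boundary trace, concluding only that $v^*\in G_\eta(\Omega)$; it never establishes lower semicontinuity of either energy term, so strictly speaking it stops short of verifying that the limit is a minimizer. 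You instead exploit strong $L^1$ compactness of $u_n$ from the compact $\text{BV}$ embedding and, crucially, obtain $\bg^*$ as a weak-$*$ limit of \emph{near-optimal} fields $\bg_n$, which simultaneously gives feasibility of $v^*$ and the inequality $\|v^*\|_{G_\eta(\Omega)}\le\liminf_n\|v_n\|_{G_\eta(\Omega)}$; combined with $L^1$-lower semicontinuity of the total variation, this genuinely closes the argument. Your route is self-contained (no Bourgain--Brezis needed), whereas the paper's citation buys membership $v^*\in G_\eta(\Omega)$ without handling the fields $\bg_n$ at all --- but it yields no liminf inequality, since Bourgain--Brezis only controls $\|\bg^*\|_{L^\infty}$ by $C\|\eta v^*\|_{L^2}$ with a non-sharp constant. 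Two further points where your write-up improves on the paper: (i) you justify the uniform $\text{BV}$ bound via the weighted mean-zero identity $\int_\Omega\eta v_n\,d\bx=0$ together with Poincar\'e and $\kappa\le\eta\le\kappa+\||D_wf|\|_{L^\infty(\Omega)}$ --- a step the paper asserts without proof, even though the bounded total variation alone does not control $\|v_n\|_{L^1(\Omega)}$; (ii) the paper's Cauchy--Schwarz display tacitly assumes strong $L^2$ convergence of $v_n$, which weak convergence does not provide (its conclusion $\int_\Omega\eta v^*\,d\bx=0$ is nonetheless salvageable directly from weak convergence, since $\eta\in L^\infty(\Omega)\subset L^2(\Omega)$ on the bounded domain), whereas your derivation of the same identity by testing the weak divergence formulation with $\phi\equiv 1$ is sound.
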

\begin{proof} Consider  a minimizing sequence $\{v_n\}\subset G_{\eta}(\Omega)$ for $\mathcal{E}(v):=\int_{\Omega}|D (f-v)|+\lambda\|v\|_{G_{\eta}(\Omega)}$. Since $\mathcal{E}(0) = \alpha_1\int_{\Omega}|D f|<+\infty$, we know that $\{v_n\}$ is bounded in $\text{BV}(\Omega)$, which is continuously embedded in $L^2(\Omega)$. Therefore, $\{v_n\}$ has a weakly convergent subsequence in $L^2(\Omega)$, still denoted as $\{v_n\}$, i.e.,  there exists some $v^*\in L^2(\Omega)$ such that $v_n\overset{L^2(\Omega)}{\rightharpoonup} v^*$. Since by Cauchy-Schwarz inequality
$$\left|\int_{\Omega}\eta v^*\,d\bx\right|=\left|\int_{\Omega}\eta (v^*-v_n)\,d\bx\right|\leq \|\eta\|_{L^2(\Omega)}\|v^*-v_n\|_{L^2(\Omega)},$$  
we see that $\int_{\Omega}\eta v^*\,d\bx=0$.  By~\cite{bourgain2003equation}, there exists $\bg^*\in L^\infty(\Omega,\mathbb{R}^2)$ such that $\nabla\cdot\,\bg^* = \eta v^*$ and $\bg^*=\mathbf{0}$ on $\partial\Omega$; hence,  $v^*\in G_{\eta}(\Omega)$. 
\end{proof}
Depending on $f$ and $w$, the uniqueness of solution is not guaranteed in general.

\section{Proposed Operator-Splitting Algorithm}\label{sec_algorithm}
In this section, we design a fast iterative algorithm based on operator-splitting to address the proposed model~\eqref{var_model}. The  computational complexity per iteration is in the order of $\mathcal{O}(N\ln N)$ for an image with $N$ pixels.

\subsection{Problem reformulation}
We reformulate~\eqref{var_model}  by adopting the barrier method for the constraint  and approximating the $G$-norm via the inverse Sobolev norm~\cite{osher2003image,huska2021variational}. In addition,  we find that adding a regularization parameter for the TV term results in more robust performances. As a result, we are led to
\begin{align}
&\min_{u,\bg}\alpha_1\int_{\Omega}|\nabla u|d\bx + \alpha_2 \int_{\Omega}|\bg|^2\,d\bx+\frac{1}{2\theta}\int_{\Omega}\left(u+\frac{1}{\eta}\nabla\cdot \bg-f\right)^2\,d\bx,
\label{eq.model.new}
\end{align}
where $\alpha_1>0$ and $\alpha_2>0$ are regularization parameters. Note that when $\theta\to 0^+$,~\eqref{eq.model.new} is equivalent to~\eqref{var_model} for $\lambda = \alpha_2/\alpha_1$, see~\cite{aujol2006color}. We then introduce an auxiliary variable $\bp$ for $\nabla u$ and define the set 
\begin{align*}
    &S_1=\{(\bg,v)\in (H^1(\Omega))^2\times L^2(\Omega): v=\frac{1}{\eta}\nabla\cdot \bg\},\\
    &S_2=\{(\bp,v)\in (H^1(\Omega))^2\times L^2(\Omega): \exists u\in H^1(\Omega) \mbox{ s.t. } \bp=\nabla u,\int_{\Omega}u+v d\bx=\int_{\Omega} fd\bx\}
\end{align*}
and their corresponding indicator function $I_{S_k}(\cdot)$ for $k=1,2$, which equals to $0$ if its argument is in $S_k$, and $+\infty$ otherwise.
Note that $S_2$ is not empty. Given $\bp$ and $v$, there exists a unique $u_{\bp,v}\in H^1(\Omega)$ satisfying the zero-Neumann boundary condition. See~\cite{deng2019new} for example. Denote the following convex functionals
\begin{align*}
    &J_1(\bp)=\alpha_1\int_{\Omega} |\bp|d\bx,\;J_2(\bg)=\alpha_2\int_{\Omega} |\bg|^2d\bx,\;J_3(\bp,v)=\frac{1}{2\theta} \int_{\Omega} (u_{\bp,v}+v-f)^2d\bx,
\end{align*}
then~\eqref{eq.model.new} is equivalent to
\begin{align}
    \min_{\substack{v\in L^2(\Omega),\\
    \bp\in (H^1(\Omega))^2,\bg\in (H^1(\Omega))^2}} J_1(\bp)+ J_2(\bg) + J_3(\bp,v) + I_{S_1}(\bg,v) + I_{S_2}(\bp,v).
    \label{eq.model.aux}
\end{align}
If $(\bp^*,\bg^*,v^*)$ is a solution to (\ref{eq.model.aux}), then $(u_{\bp^*,v^*},\bg^*)$ solves (\ref{eq.model.new}). Moreover, $(\bp^*,\bg^*,v^*)$ 
is a  stationary solution to the following initial value problem
\begin{align}
    \begin{cases}
        \frac{\partial \bp}{\partial t} + \partial_{\bp} J_1(\bp)  + \partial_{\bp} J_3(\bp,v) + \partial_{\bp} I_{S_2}(\bp,v) \ni \mathbf{0},\\
        \frac{\partial \bg}{\partial t} + \partial_{\bg} J_2(\bg)  + \partial_{\bg} I_{S_1}(\bg,v) \ni \mathbf{0},\\
         \frac{\partial v}{\partial t} + \partial_v J_3(\bp,v)  + \partial_{v} I_{S_1}(\bg,v) + \partial_{v} I_{S_2}(\bp,v) \ni 0,
    \end{cases}
    \label{eq.ivp}
\end{align}
where $\partial$ denotes the subgradient. 
We use operator-splitting methods to solve (\ref{eq.ivp}). Alternatively, one can use alternating direction method of multipliers (ADMM) to solve (\ref{eq.model.new}). Compared to operator-splitting methods, ADMM is more sensitive to parameters. And as reported in \cite{deng2019new}, operator-splitting methods are more efficient and provide results with visually better effects.
\subsection{Operator splitting scheme}
Denote $t^n=n\Delta t$ with $\Delta t $ being the time step. We denote the numerical solution $(\bp,\bg,v)$ at $t^n$ by $(\bp^n,\bg^n,v^n)$. Given $(\bp^n,\bg^n)$, we use the Lie scheme~\cite{deng2019new} with one-step backward Euler method for time discretization to update $(\bp^n,\bg^n,v^n)\rightarrow (\bp^{n+1/2},\bg^{n+1/2},v^{n+1/2})\rightarrow (\bp^{n+1},\bg^{n+1},v^{n+1})$ via two substeps, leading to a time discretization scheme of the Marchuk--Yanenko type \cite{glowinski2016some}:
\begin{align}
&\begin{cases}
     \frac{\bp^{n+1/2}-\bp^n}{\Delta t}+ \partial_{\bp}J_1(\bp^{n+1/2})\ni \mathbf{0},\\
     \frac{\bg^{n+1/2}-\bg^n}{\Delta t}+ \partial_{\bg} J_2(\bg^{n+1/2})+ \partial_{\bg} I_{S_1}(\bg^{n+1/2},v^{n+1/2}) \ni \mathbf{0},\\
     \frac{v^{n+1/2}-v^n}{\Delta t} + \partial_{v} I_{S_1}(\bg^{n+1/2},v^{n+1/2}) \ni 0,
    \end{cases}
    \label{eq.split.dis.1}\\
   & \begin{cases}
        \frac{\bp^{n+1}-\bp^{n+1/2}}{\Delta t} + \partial_{\bp}J_3(\bp^{n+1},\bg^{n+1}) + \partial_{\bp} I_{S_2}(\bp^{n+1},\bg^{n+1}) \ni \mathbf{0},\\
        \frac{\bg^{n+1}- \bg^{n+1/2}}{\Delta t}=\mathbf{0},\\
        \frac{v^{n+1}- v^{n+1/2}}{\Delta t} +  \partial_{v}J_3(\bp^{n+1},v^{n+1}) + \partial_{v} I_{S_2}(\bp^{n+1},v^{n+1}) \ni 0.
    \end{cases}
    \label{eq.split.dis.2}
\end{align}

The rest of this section discusses solutions to each subproblem.
Let $u$ be an image of size $M\times N$. Considering periodic boundary condition, we define the discretized difference operator
\begin{align*}
    &\partial^{\pm}_1 u(i,j)= \pm(u(i\pm 1,j)-u(i,j)), \ \partial^{\pm}_2 u(i,j)= \pm(u(i,j\pm1)-u(i,j)),\\
    &\partial^{\pm}_{3} u(i,j)= \pm(u(i\pm 1,j\pm1)-u(i\pm1,j\pm1)),\\
    &\partial^{\pm}_{4} u(i,j)= \pm(u(i\pm 1,j\mp1)-u(i\pm1,j\mp1)).
\end{align*}
Let $\mathfrak{W}(i,j)$ be a window centered at $(i,j)$. Using line integral in the  TSV~\eqref{eq_TSV} many cause instability. Here we use a rotated anisotropic Gaussian defined over $\mathfrak{W}(i,j)$ to approximate it. Specifically, we define $w_{\theta}(i,j)=Ce^{-(ai^2+2bij+cj^2})$ for
\begin{align}
    a=\frac{\cos^2(\theta)}{2\sigma_1}+ \frac{\sin^2(\theta)}{2\sigma_2}, \ b= \frac{\sin(2\theta)}{4\sigma_1}-\frac{\sin(2\theta)}{4\sigma_2}, c=\frac{\sin^2(\theta)}{2\sigma_1}+ \frac{\cos^2(\theta)}{2\sigma_2},
    \label{eq:abcd}
\end{align}
where $C$ is a normalizing parameter so that $\sum_{(k,\ell)\in \mathfrak{W}(i,j)} w(k-i,\ell-j)=1$.
Let $\{\theta_{\gamma}\}_{\gamma=1}^4=\{0, \pi/2, \pi/4, 3\pi/4\}$ be the rotating directions corresponding to the difference operator defined above. Our discretized approximation of TSV with four directions is
\begin{align}
    |D_wf|(i,j)= \sum_{\gamma=1}^4\left|\sum_{(k,\ell)\in  \mathfrak{W}(i,j)} w_{\theta_{\gamma}}(k-i,\ell-j)\partial^+_{\gamma} u(k,\ell)\right|.
    \label{eq:Dwf_disc}
\end{align}
The discretization \eqref{eq:abcd}--\eqref{eq:Dwf_disc} allows to control the line integral shape not by changing the size of $\mathfrak{W}(i,j)$, instead using $\sigma_1$ to control the line length, while $\sigma_2$ controls the line width.

\subsection{On the solution to (\ref{eq.split.dis.1})}
In (\ref{eq.split.dis.1}), $\bp^{n+1/2}$ solves
\begin{align}
    \bp^{n+1/2}=\argmin_{\bp} \left[\frac{1}{2}\int_{\Omega} |\bp-\bp^n |_2^2dx + \Delta t \alpha_1\int_{\Omega} |\bp|dx\right],
\end{align}
and we have a closed-form expression $\bp^{n+1/2}=\max\left\{ 0, 1-\frac{\Delta t \alpha_1}{|\bp^n|}\right\} \bp^n.$
For $\bg^{n+1/2}$, we solve for it by the following second-order equation:
\begin{align}
     \bg^{n+1/2} -& c\nabla(\nabla\cdot \bg^{n+1/2}) + 2\Delta t \alpha_2 \bg^{n+1/2}\nonumber \\
    &= \bg^n -c\nabla(\nabla\cdot \bg^{n}) +  \nabla \left(\frac{1}{\eta^2} \nabla\cdot \bg^{n}\right) -\nabla\left(\frac{1}{\eta} v^n\right),\label{eq_frozen}
\end{align}
where $c>0$ is a frozen coefficient~\cite{deng2019new}.
This leads to $\bA\bg^{n+1/2}=\bb$ with
\begin{align*}
    &\bA=\begin{bmatrix}
        -c\partial_1^+\partial_1^- +2\Delta t\alpha_2 & -c\partial_1^+\partial_2^-\\
        -c\partial_2^+\partial_1^- & -c\partial_2^+\partial_2^- +2\Delta t\alpha_2
    \end{bmatrix},~\text{and}\\
    &\bb=\begin{bmatrix}
         g_1^n-\partial_1^+((c - \frac{1}{\eta^2})(\partial_1^- g_1^n + \partial_2^- g_2^n)) - \partial_1^+(\frac{1}{\eta} v^n)\\
         g_2^n-\partial_2^+((c-\frac{1}{\eta^2})(\partial_1^- g_1^n + \partial_2^- g_2^n)) - \partial_2^+(\frac{1}{\eta} v^n)
        \end{bmatrix},
    \label{eq.split1.g}
\end{align*}
where $\partial_k^+$ and  $\partial_k^-$ denote the forward and backward difference of the $k$-th coordinate, respectively.  After getting $\bg^{n+1/2}$, we update $v^{n+1/2}$ via $\frac{1}{\eta}\nabla\cdot \bg^{n+1/2}$.

\subsection{On the solution to (\ref{eq.split.dis.2})}
By~\eqref{eq.split.dis.2}, $(u^{n+1},v^{n+1})$ solve
\begin{align}
\begin{cases}
    (u^{n+1},v^{n+1})=&\displaystyle\min_{u, v} \frac{1}{2}\int_{\Omega} |\nabla u-\bp^{n+1/2}|^2dx + \frac{1}{2}\int_{\Omega} |v-v^{n+1/2}|^2dx \\
    &+ \frac{\Delta t}{2\theta} \int_{\Omega}|u+v-f|^2dx,\\
    \bp^{n+1}=\nabla u^{n+1}.
\end{cases}
\end{align}

The optimality condition is
\begin{align}
    \begin{cases}
        \frac{\Delta t}{\theta}u^{n+1} -\nabla^2u^{n+1}+\frac{\Delta t}{\theta}v^{n+1}=-\nabla\cdot \bp^{n+1/2}+\frac{\Delta t}{\theta}f,\\
        (+ \frac{\Delta t}{\theta}) v^{n+1}+\frac{\Delta t}{\theta}u^{n+1}=  v^{n+1/2}+\frac{\Delta t}{\theta} f.
    \end{cases}
\end{align}
In a matrix form, we have $\overline{\bA}[
        u^{n+1}, v^{n+1} 
    ]^{\top}=\overline{\bb}$ with
\begin{align*}
    &\overline{\bA}=\begin{bmatrix}
        \frac{\Delta t}{\theta}-(\partial_1^-\partial_1^++ \partial_2^-\partial_2^+)& \frac{\Delta t}{\theta}\\
        \frac{\Delta t}{\theta} &+ \frac{\Delta t}{\theta}
    \end{bmatrix}, \overline{\bb}=\begin{bmatrix}
        -(\partial_1^- p_1^{n+1/2} + \partial_2^- p_2^{n+1/2})+\frac{\Delta t}{\theta} f\\
         v^{n+1/2}+\frac{\Delta t}{\theta} f
    \end{bmatrix}.
\end{align*}
\section{Numerical Experiments}\label{sec_experiment}

\begin{figure}[t!]
    \centering
    \begin{tabular}{ccccc}
    \includegraphics[width=0.16\linewidth]{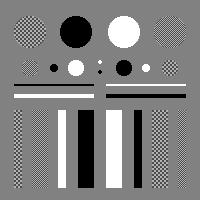} &
    \includegraphics[width=0.16\linewidth]{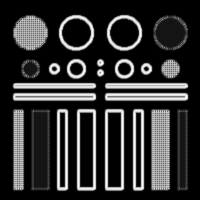} & \includegraphics[width=0.16\linewidth]{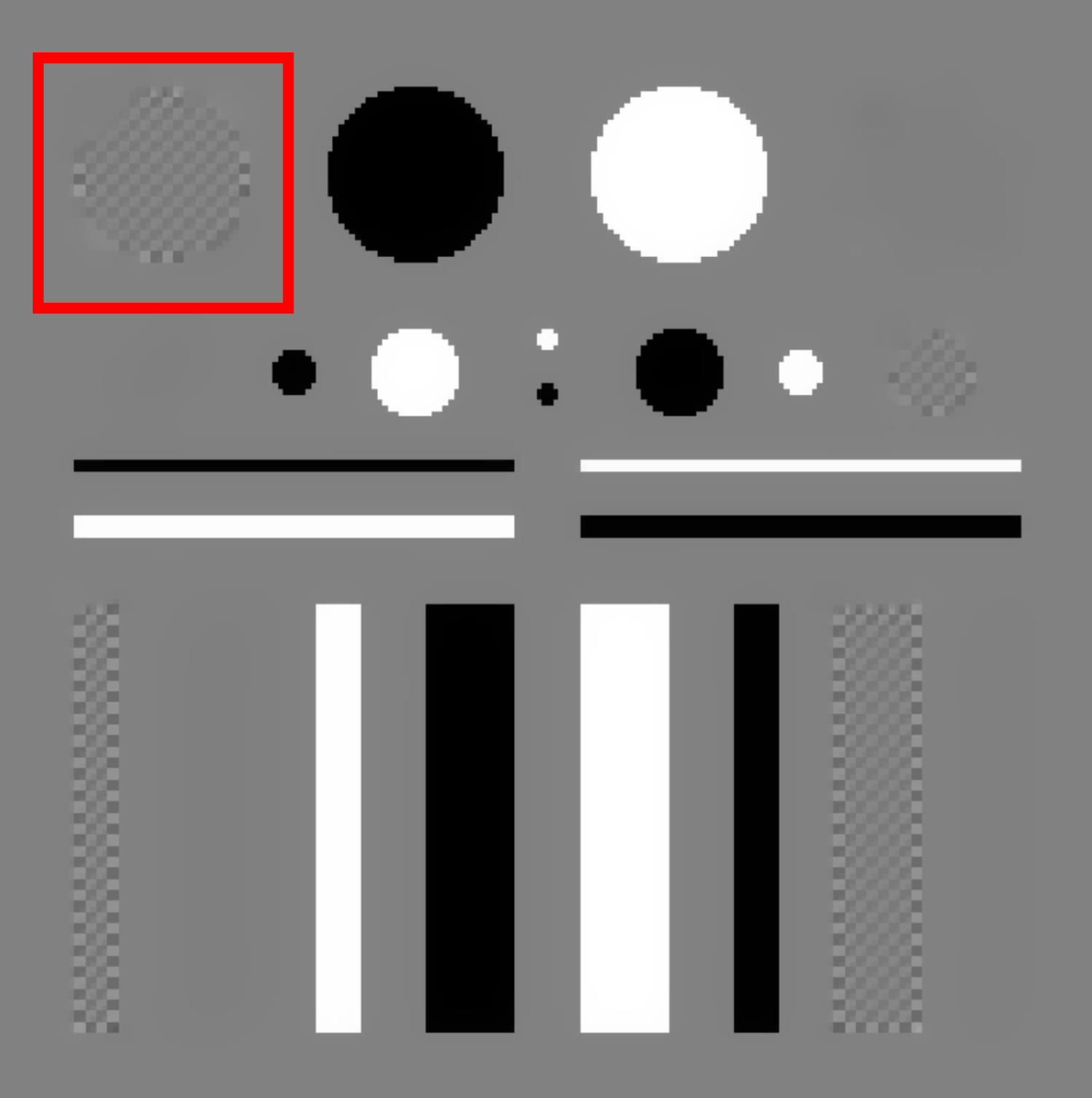} & 
    \includegraphics[width=0.16\linewidth]{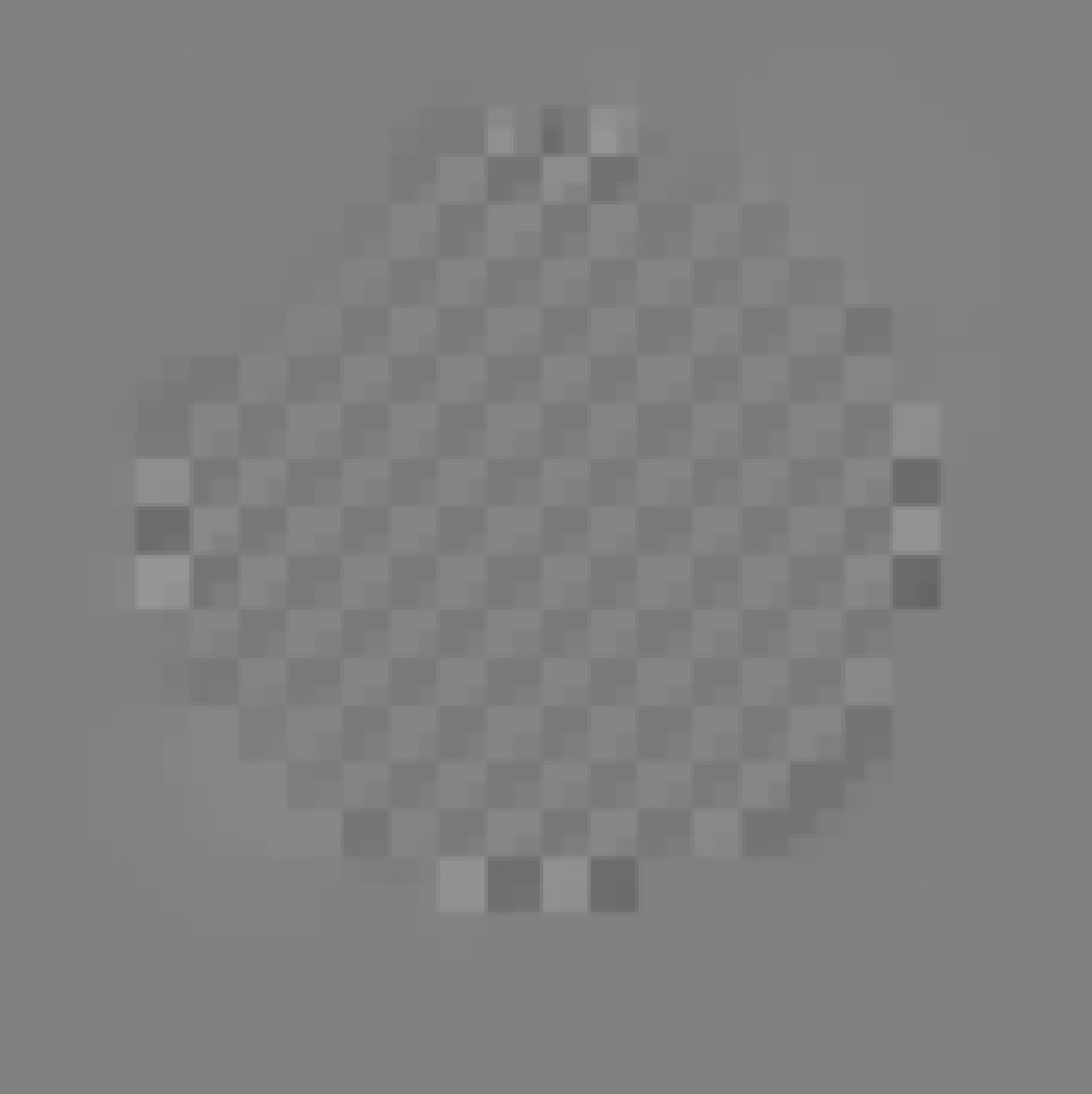}& 
    \includegraphics[width=0.16\linewidth]{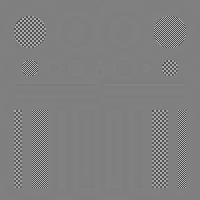} \\
    (a)&(b)&(c)&(d)&(e)\\
    \includegraphics[width=0.16\linewidth]{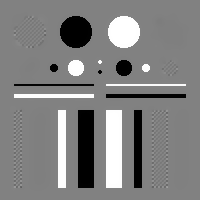} &
    \includegraphics[width=0.16\linewidth]{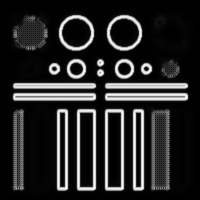} & \includegraphics[width=0.16\linewidth]{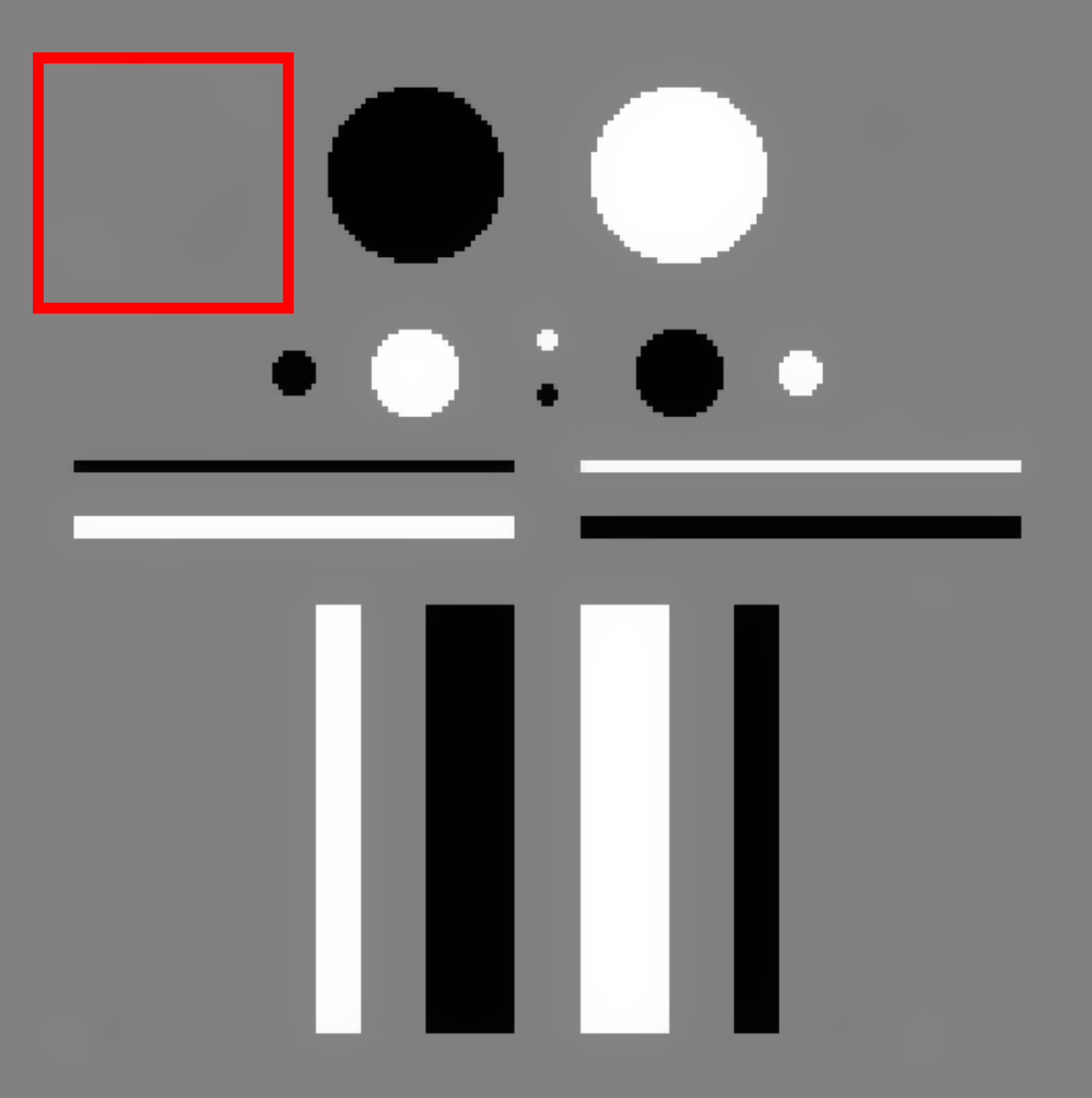} & 
    \includegraphics[width=0.16\linewidth]{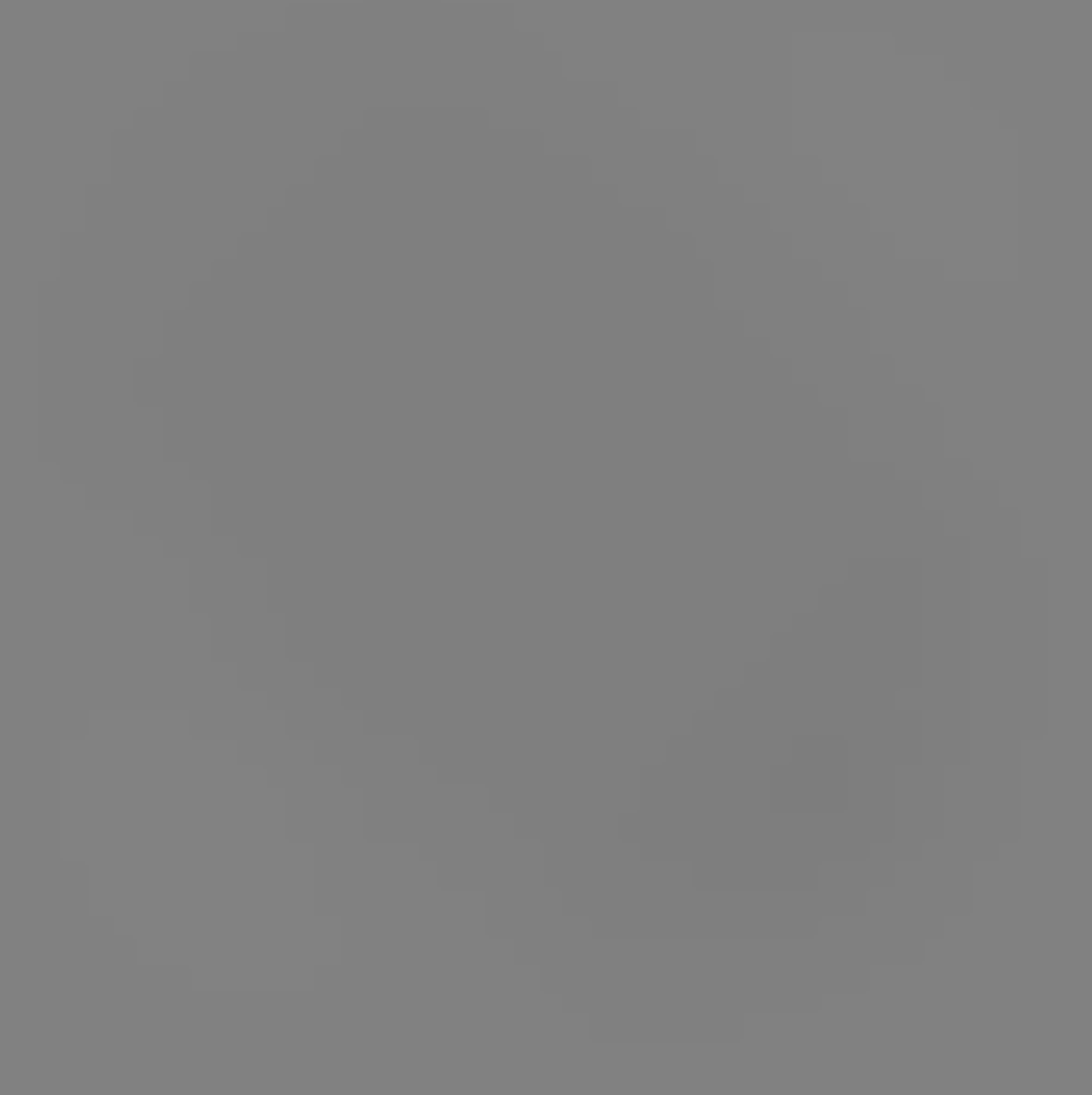} &
    \includegraphics[width=0.16\linewidth]{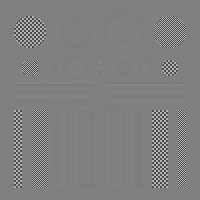} \\
    (f)&(g)&(h)&(i)&(j)\\
    \end{tabular}
    \caption{Re-starting technique. 
    (a) Input image $u^{(0)}$. (b) Normalized weight $\eta(u^{(0)})$ weight function. (c)  $u^{(1)}$  component. (d) Zoom-in of (c). (e) $v^{(1)}$ component.
    Restart the algorithm to decompose (f) $u^{(1)}$ which is identical to (c). (g) $\eta(u^{(1)})$ weight function. (h) $u^{(2)}$  component. (i) Zoom-in of (h). (j) The accumulated texture part $v^{(1)}+v^{(2)}$. Here we set $\alpha_1=0.03, \alpha_2 = 0.3, \theta = 1\times10^{-6}$, $\sigma_1=2.75$, $\sigma_2=0.75$, and $\kappa = 0.1$.}
    \label{fig:ex_restart}
\end{figure}

In this section, we  test our  model by fixing the parameters $\Delta t = 0.08$. The frozen coefficient in~\eqref{eq_frozen} is $c= 1.0$, and the maximal number of iterations is set to be $2000$. For the weight function $\eta$ in~\eqref{var_model}, 
we compute the TSV~\eqref{eq_TSV} via \eqref{eq:abcd}--\eqref{eq:Dwf_disc} which evaluate~\eqref{eq_TSV} along the horizontal, vertical, and two diagonal directions over a window of size $20$ and the standard deviation $\sigma_1, \sigma_2 >0$,
which can vary depending on the scales of textures. 
Note that $\eta$ is pre-computed and fixed throughout the iterations. 
We apply the non-local means~\cite{buades2011non} to denoise the input image for computing its TSV. \\

\noindent\textbf{Re-starting technique.} We find that applying the following re-starting technique,  similarly considered in~\cite{tadmor2004multiscale}, renders more visually appealing results. Let $u^{(0)}$ be a given image. For $k=1,2,\dots$, we solve~\eqref{eq.model.new} with $f=u^{(k-1)}$ and  update the weight with $\eta(u^{(k-1)})$. We denote the associated solution as $(u^{(k)},v^{(k)})$. After the $k$-th step, the structural part of  the input $u^{(0)}$ is  represented by $u^{(k)}$, and its textural part is represented by the accumulation $\sum_{i=1}^k v^{(i)}$. We illustrate in Figure~\ref{fig:ex_restart} the re-starting technique applied to our method for a synthetic image reported in (a), and the associated weight function is shown in (b). The obtained decomposition has residual textures present in the structure component $u^{(1)}$ shown in (c), whose region in the red box is zoomed in (d); and the identified textural component $v^{(1)}$ is in (e).
By applying the re-starting to $u^{(1)}$, whose weight function is in (g), the resulting structure  $u^{(2)}$ is reported in (h). Compared to (d), the zoom in of (h) in (i) is flat. In (j), we show the summation of $v^{(1)}$ and $v^{(2)}$ accumulated from all the  restarts. Based on this observation, in the following, we keep this restarting technique every $400$ iterations.\\
\begin{figure}[t!]
\centering
\includegraphics[width=0.75\textwidth]{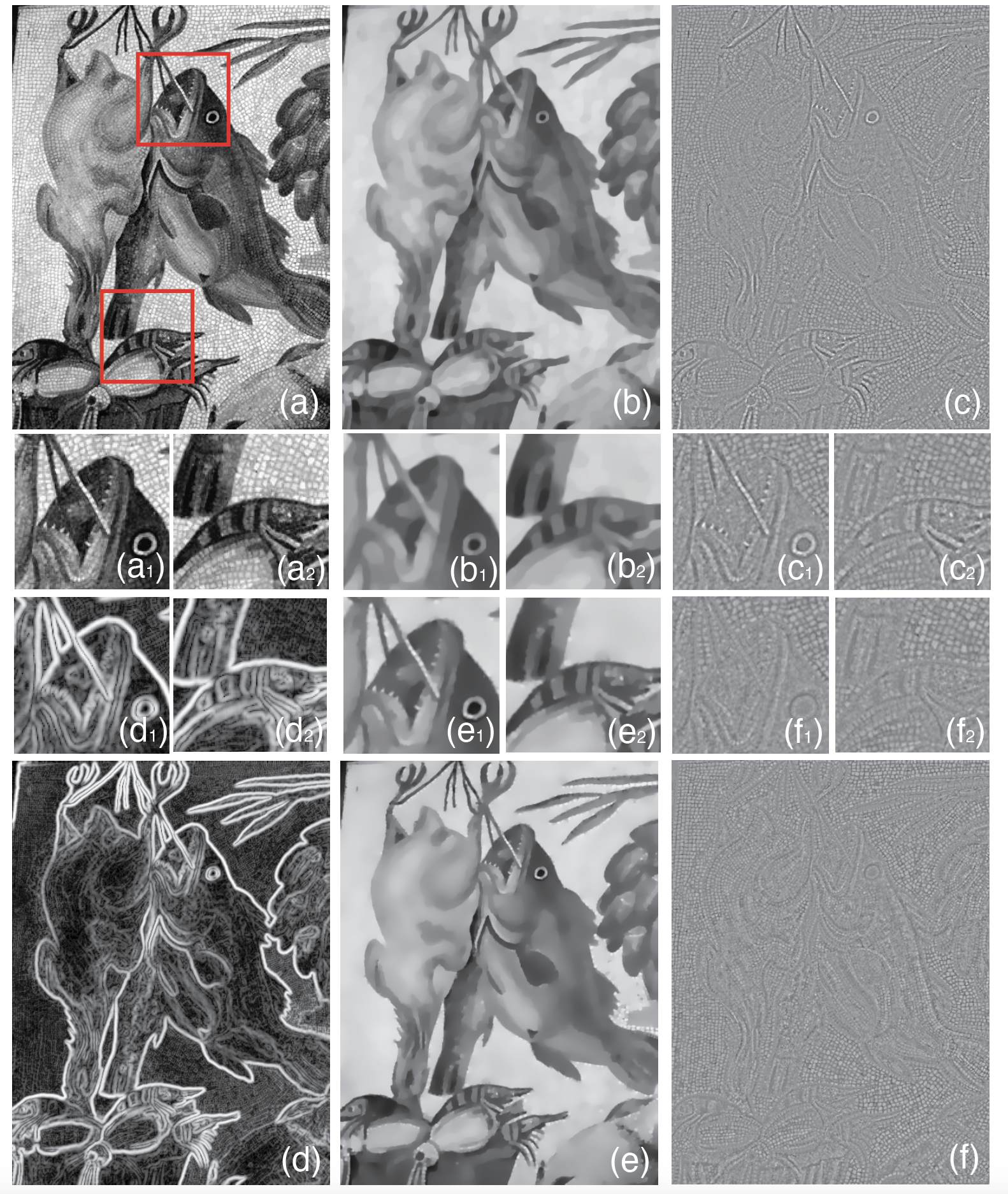}
\caption{Results by proposed method for a mosaic picture (a) and its zoom-ins in red boxes (a$_1$) and (a$_2$); normalized scale $\eta$ (d), (d$_1$), (d$_2$); resulting component $u$ (b), (b$_1$), (b$_2$), and the texture component (c), (c$_1$), (c$_2$) when using constant $\eta$ values reproducing the model in \cite{osher2003image}.
The corresponding component $u$ (e), (e$_1$), (e$_2$) and texture (f), (f$_1$), (f$_2$) are obtained using $\alpha_1=0.09, \alpha_2 = 0.01, \theta = 1\times10^{-6}$, $\sigma_1=1.5$, $\sigma_1=0.1$, and $\kappa = 0.01$. These parameters vary depending on the image sizes and texture scales.}
\label{fig:ex_mosaic}
\end{figure}

\noindent\textbf{Tests with real images.} We have tested our proposed method on different test images varying in terms of dimensions, texture type and level of detail.
In Figure~\ref{fig:ex_mosaic}, we report a particularly difficult image of mosaics in which the texture edge -- grout between individual tiles directly overlaps with the structure -- shape created with different graylevels of tiles (a).
Our proposed method proved to be robust enough to generate a suitable space-variant map for the model in (d), producing the structure in (e) and the texture components in (f).
In Figure~\ref{fig:ex_mosaic} (b) and (c) we report the resulting structure and texture component for constant $\eta$ values, which corresponds to the model of \cite{osher2003image}. \\

\begin{figure}[t!]
    \centering
    \begin{tabular}{ccccc}
    \includegraphics[width=0.19\linewidth]{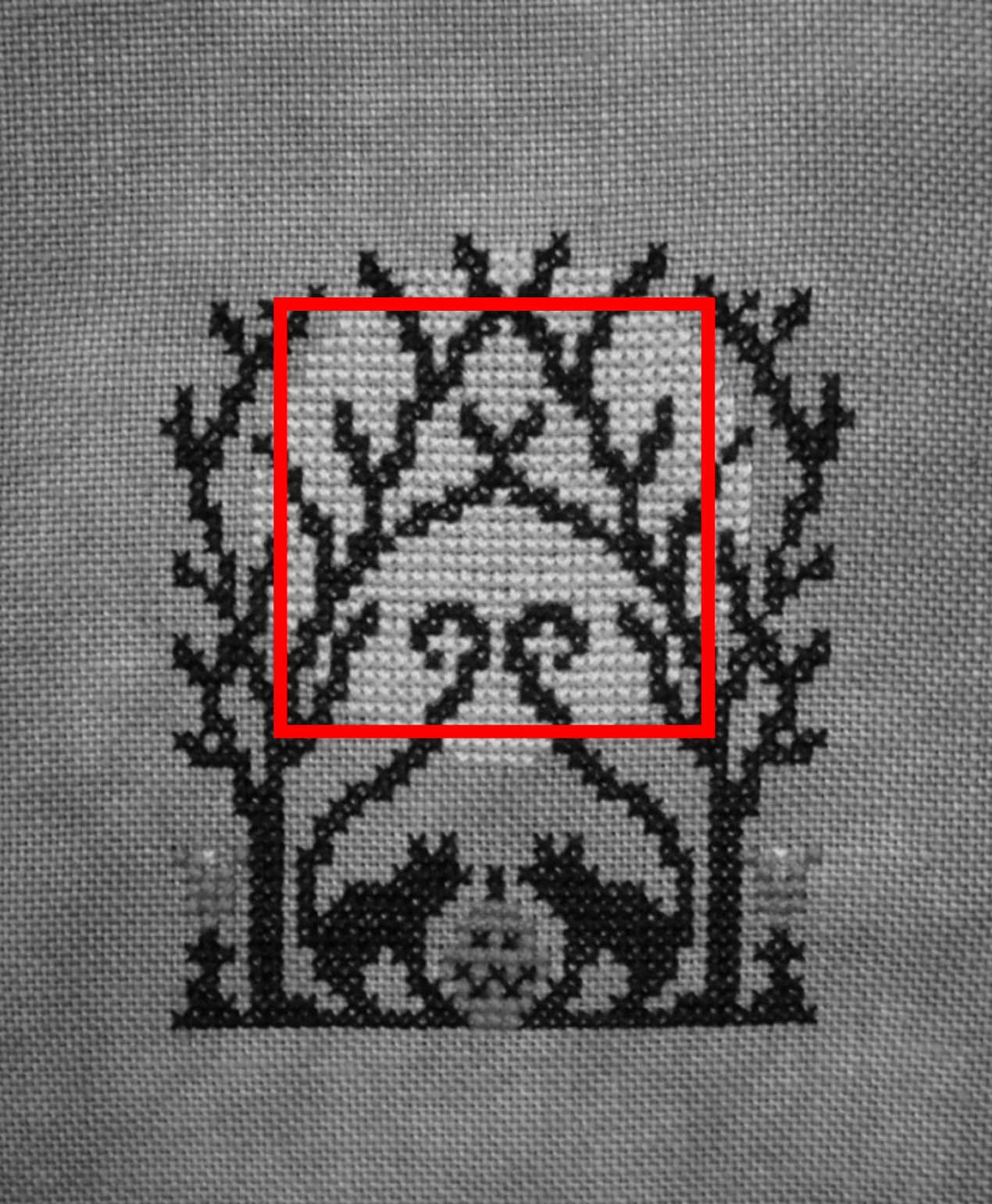} &
    \includegraphics[width=0.19\linewidth]{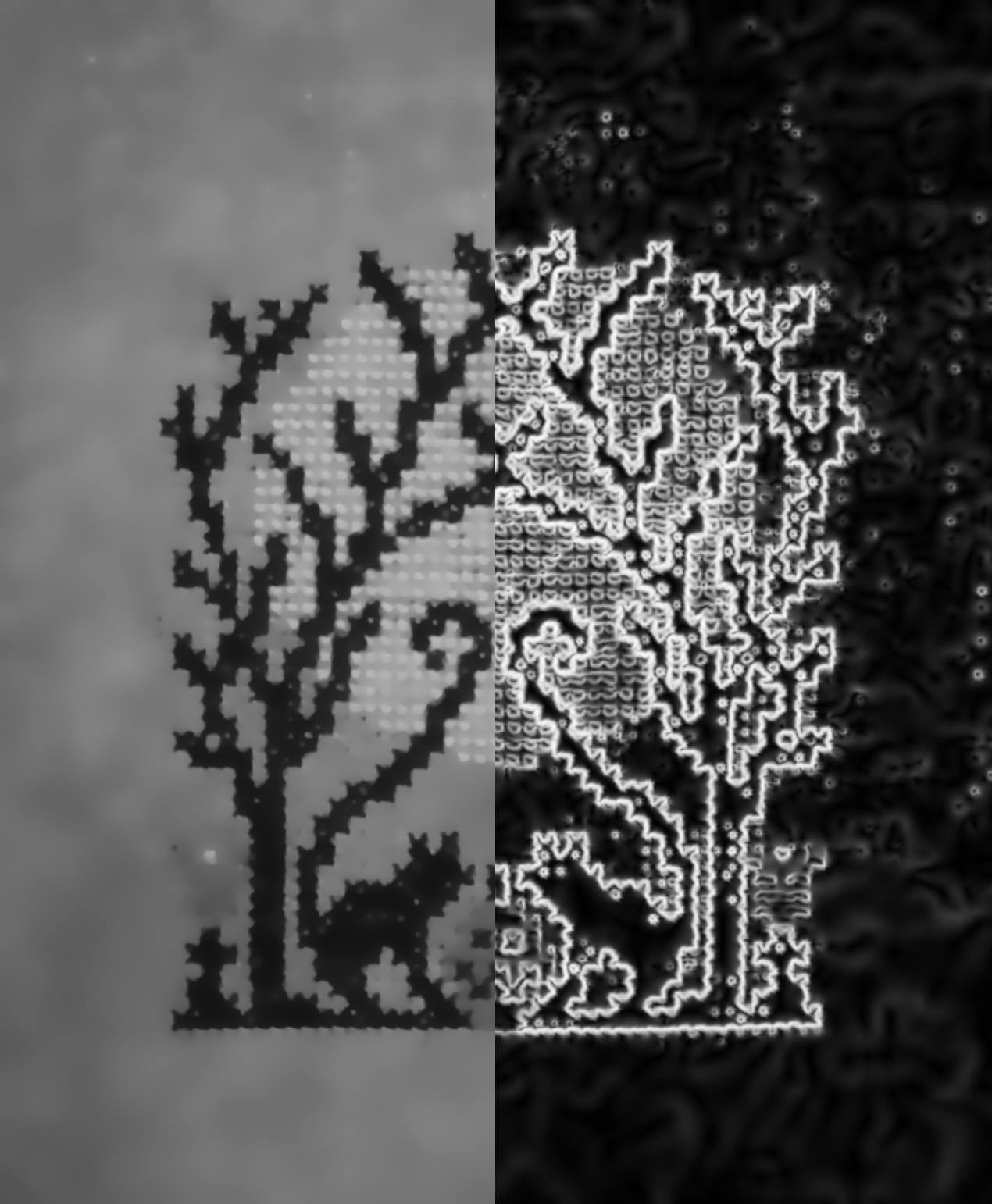} & \includegraphics[width=0.19\linewidth]{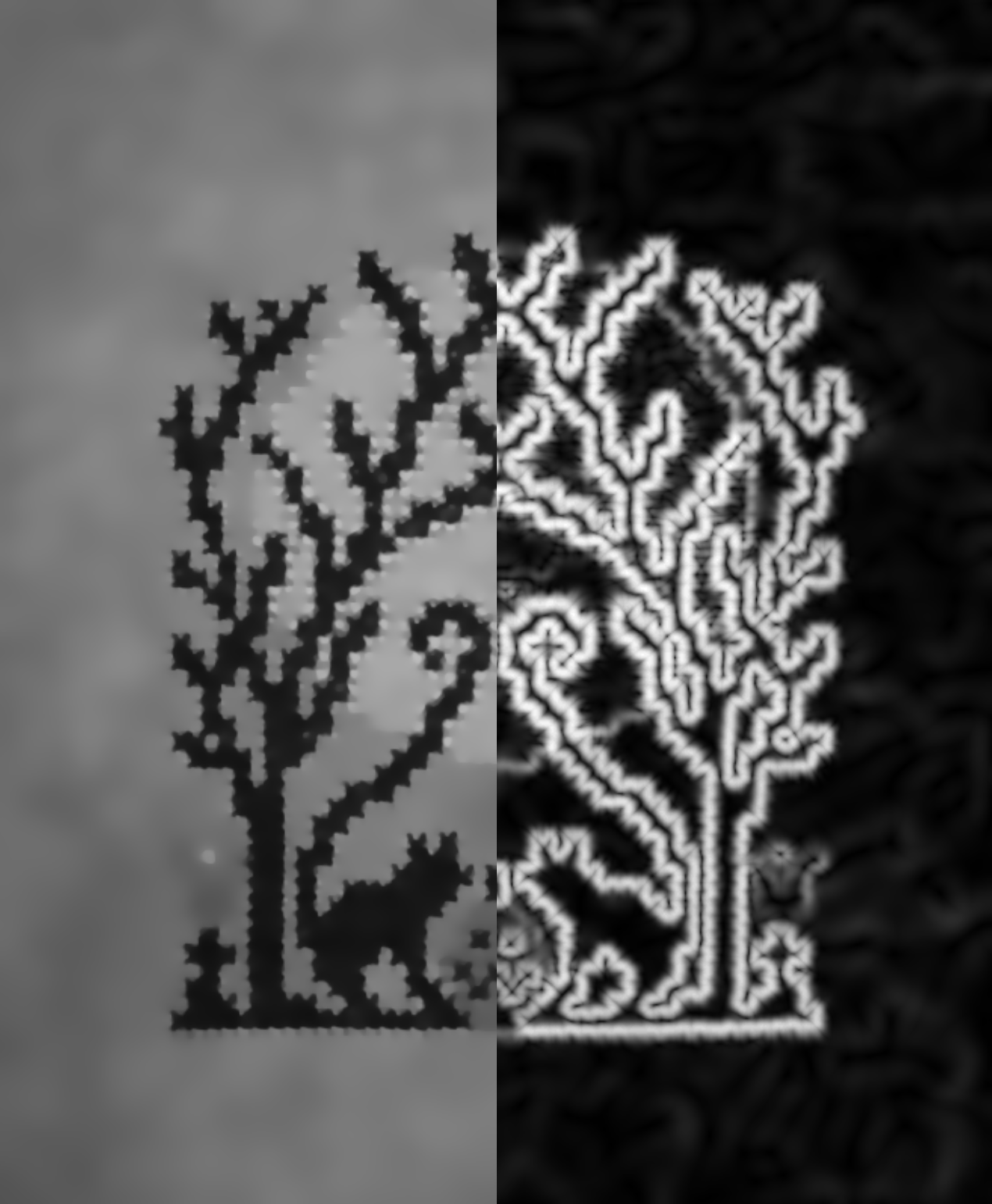} & 
    \includegraphics[width=0.19\linewidth]{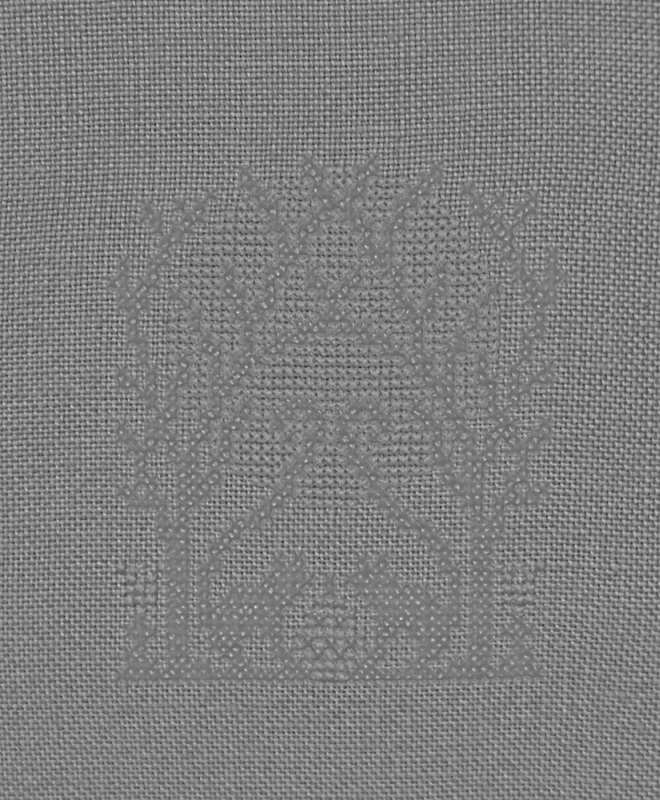}&
    \includegraphics[width=0.19\linewidth]{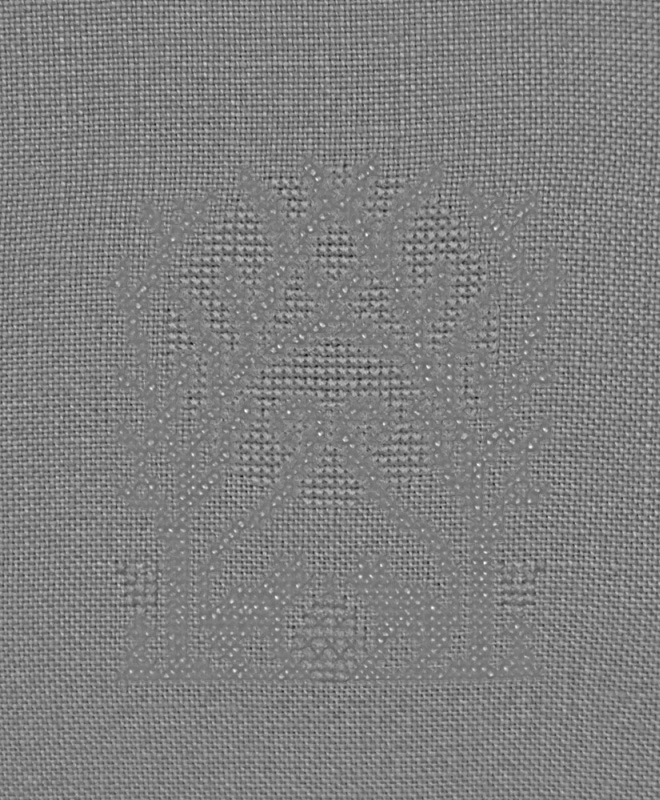}
    \\
 \includegraphics[width=0.19\linewidth]{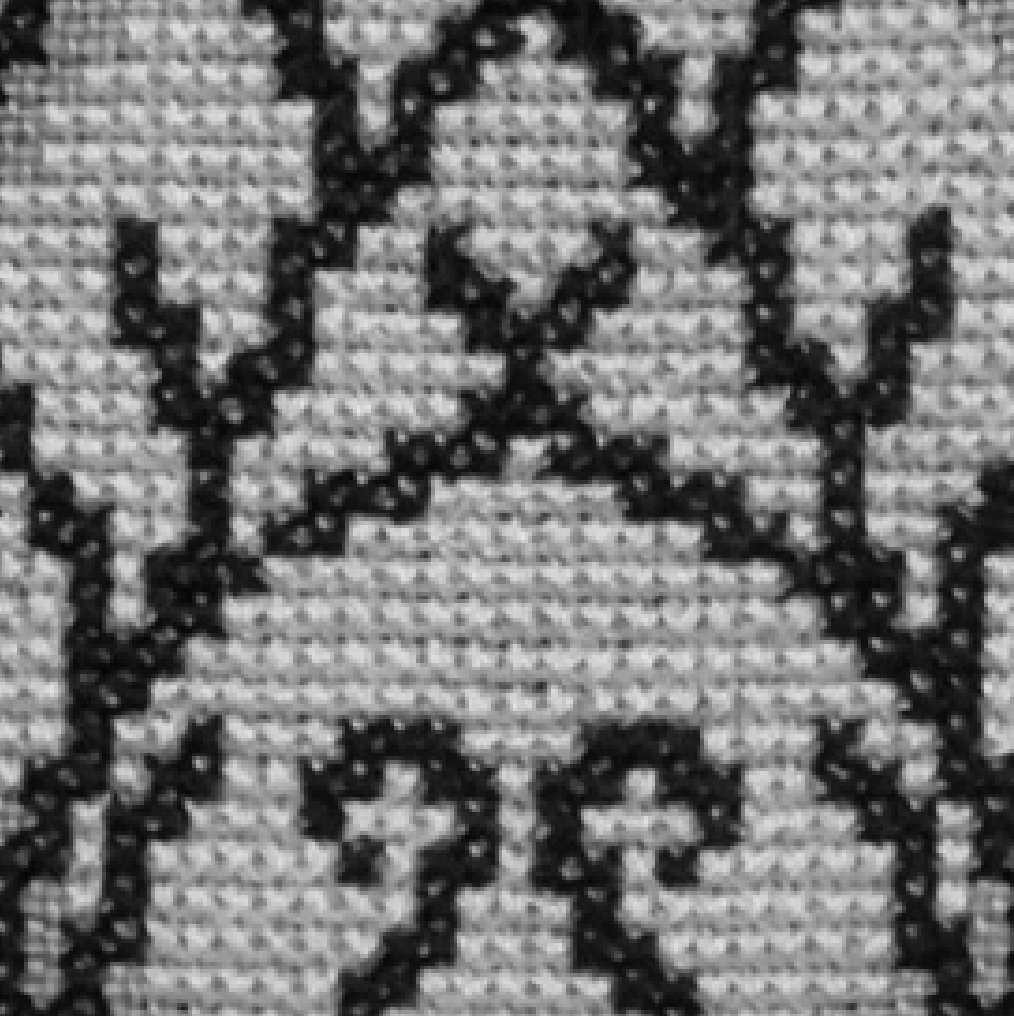} &
    \includegraphics[width=0.19\linewidth]{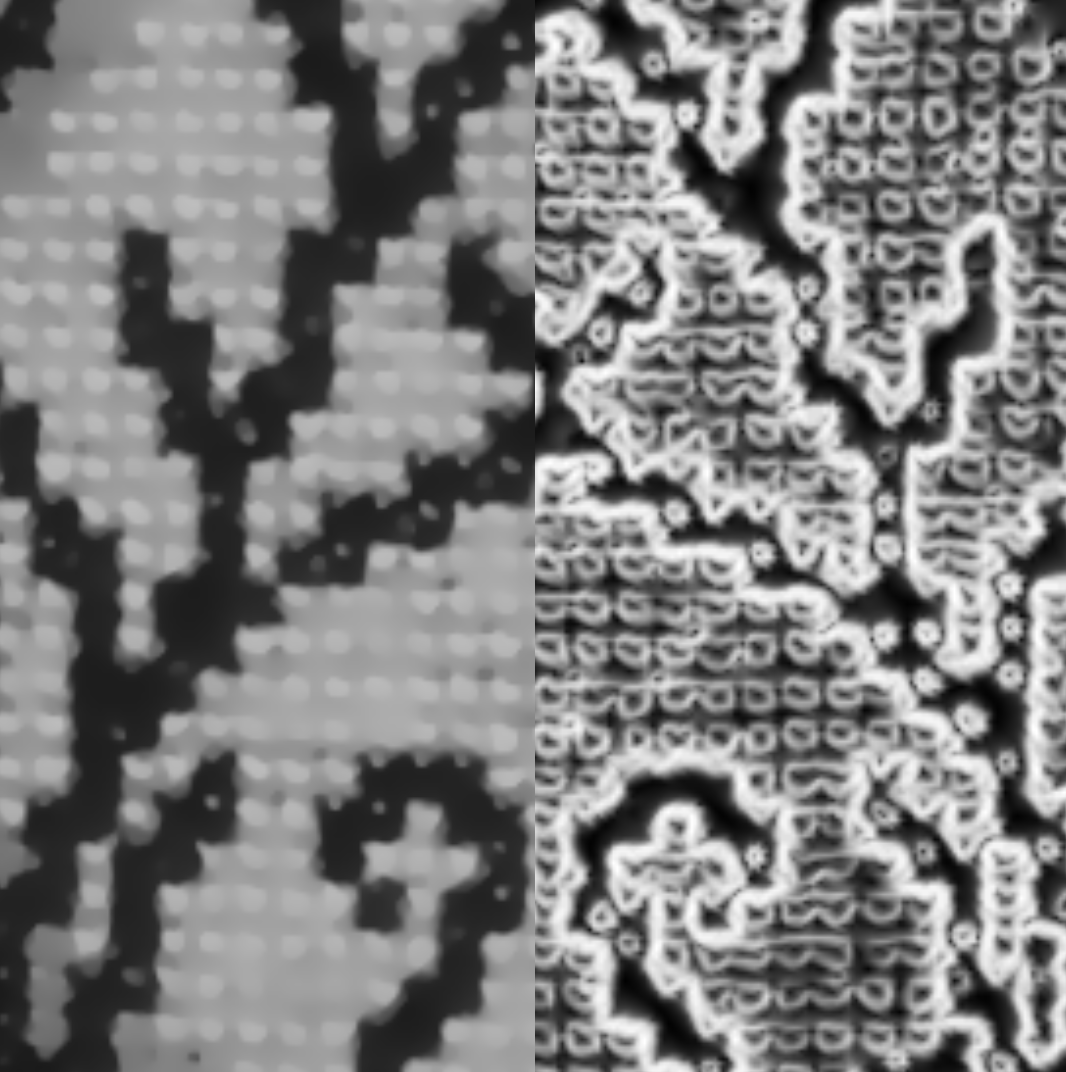} & \includegraphics[width=0.19\linewidth]{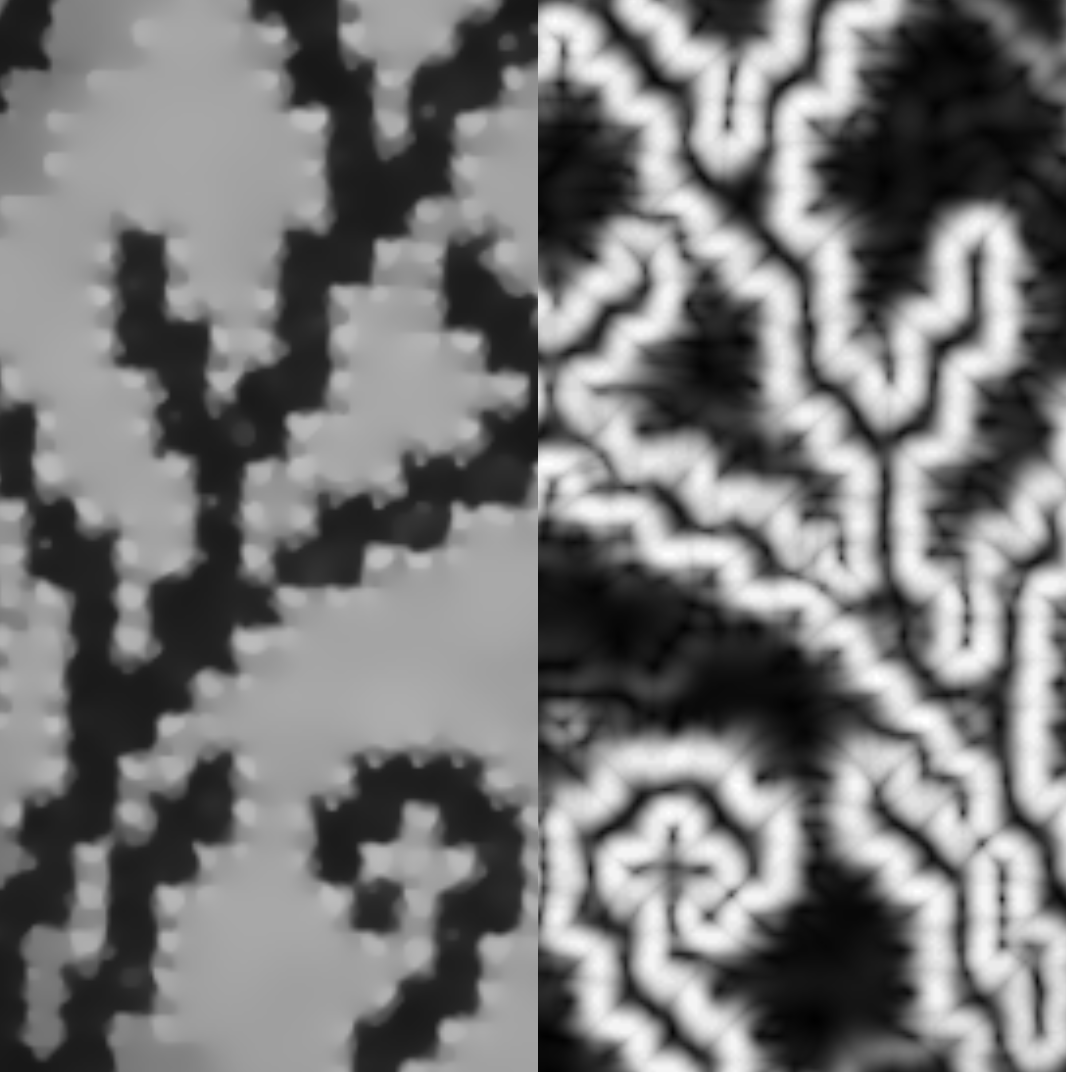} & 
    \includegraphics[width=0.19\linewidth]{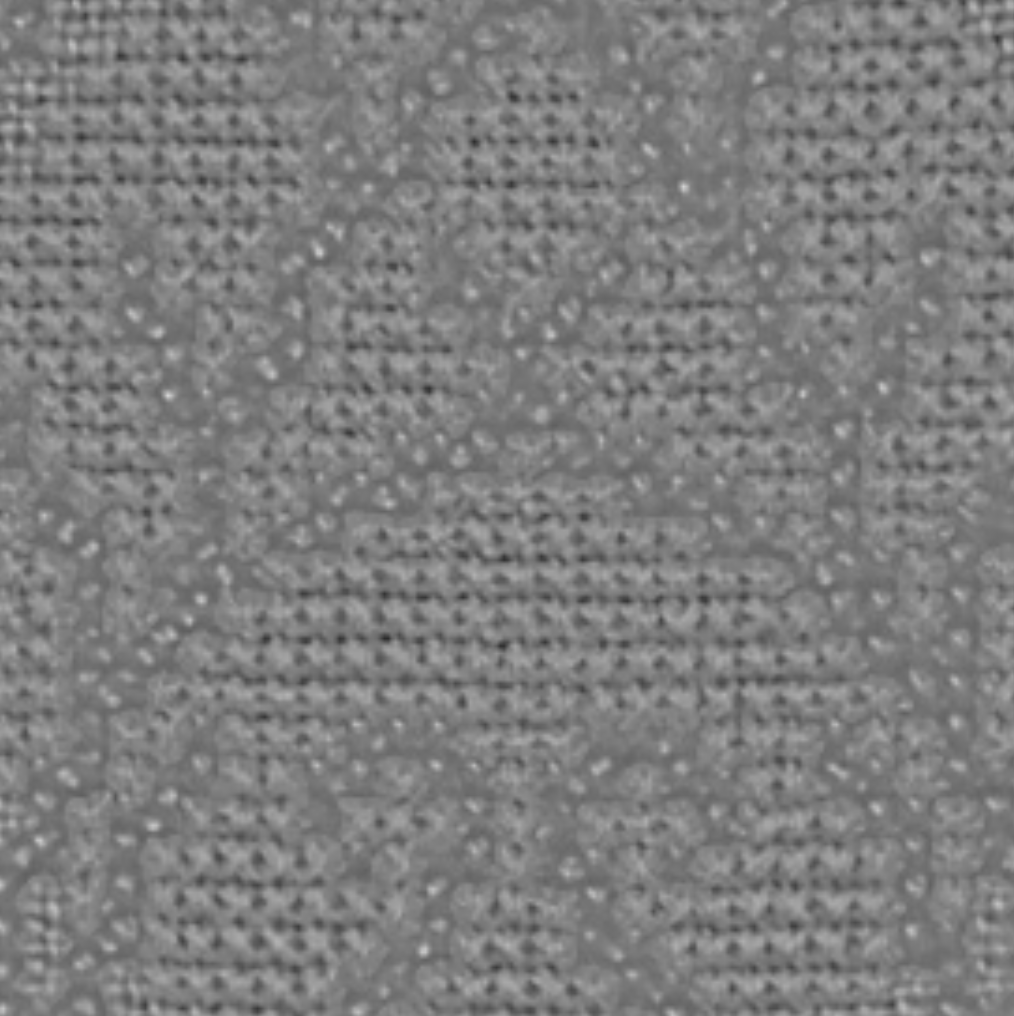} &
    \includegraphics[width=0.19\linewidth]{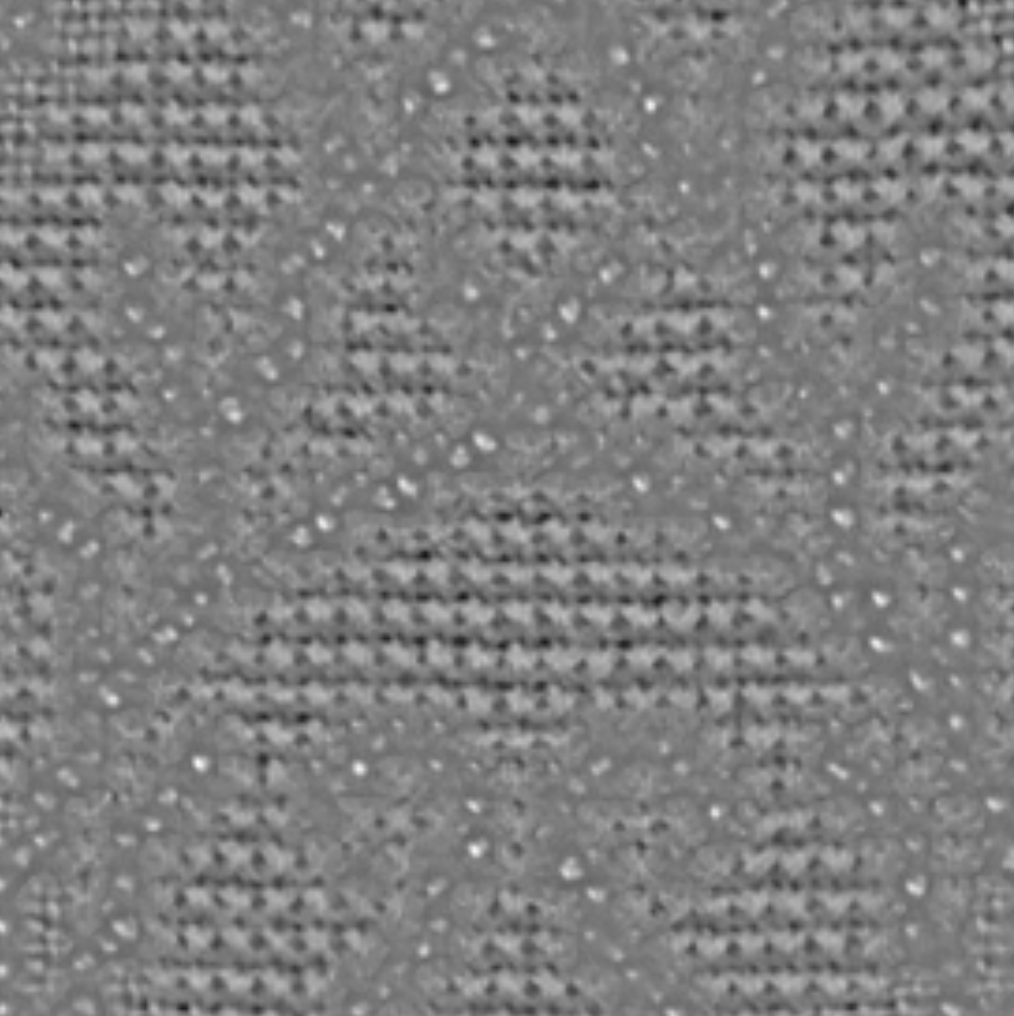} 
    \\
    (a) & (b) & (c) & (d) & (e)
    \end{tabular}
    \caption{Effects of the TSV parameter $\sigma$. Input image (a);  resulting component $u$ (left half) and normalized $\eta$ (right half) in (b) and texture component in (d) when using $\sigma_1=1.5$ and $\sigma_2=0.1$;  the corresponding results when using $\sigma_1=1.5$ and $\sigma_2=2$ are in (c) and (e). The second row shows the zoom-ins of the first row in the red box. The other parameters are $\alpha_1=0.15, \alpha_2 = 0.01, \theta = 1\times10^{-6}$,  and $\kappa = 0.01$.}
    \label{fig:diff_sigma}
\end{figure}

\noindent\textbf{Effects of $\sigma_1,\sigma_2$ on scale selection.} In the next example, we notice that the TSV discretization in \eqref{eq:abcd}--\eqref{eq:Dwf_disc} is directly influenced by two factors: the window size or the line length $\sigma_1$ and the line width $\sigma_2$.
In Figure~\ref{fig:diff_sigma}, the input image (a) presents two different texture scales: the canvas and the stitches.
We computed the space-variant normalized weight field $\eta$ (b)-(c) using $\sigma_1=1.5$, $\sigma_2=0.1$ and $\sigma_1=1.5$, $\sigma_2=2$, respectively.
The smaller $\sigma_2$ value clearly indicates the stitches as structural edges, consequently maintained in the structure component $u$ (b), while the evaluation of $\eta$ over a wider line corresponding to the larger $\sigma_2$ value in Figure \ref{fig:diff_sigma} (c) results in capturing the intensity as a structure component $u$ (c) while the stitches were captured in the texture component (e). The zoom-ins are reported in the second row of Fig. \ref{fig:diff_sigma}.

\section{Conclusion}\label{sec_conclude}
We introduced a novel image decomposition model using total symmetric variation (TSV) to detect local symmetric intensity variations. The model decomposes an image into its structure component regularized by total variation and  texture component  by a weighted 
$G$-norm, with weights derived from TSV. It ensures to capture only uniform textural interiors while excluding boundaries. The method extends to color images and can be enhanced with a multiscale strategy. Experiments show the model's effectiveness, yielding promising results.

\section*{Acknowledgement}
Roy Y. He  was partially supported by CityU grant 7200779. M. Huska was partially supported by INdAM - GNCS Project CUP\_E53C24001950001 and this study was carried out within the MICS (Made in Italy - Circular and Sustainable) Extended Partnership with received funding from the European Union Next-GenerationEU (PNRR) - D.D. 1551.11-10-2022, PE00000004). H. Liu was partially supported by National Natural Science Foundation of China grant 12201530 and HKRGC ECS grant 22302123.

\bibliographystyle{splncs04}
\bibliography{mybibliography}
\end{document}